\documentclass[11pt, a4paper]{lumia}

\usepackage[sort&compress]{natbib}
\usepackage{xspace}

\theoremstyle{plain}
\newtheorem{theorem}{Theorem}%[section]
\newtheorem{proposition}[theorem]{Proposition}
\newtheorem*{proposition*}{Proposition}

\theoremstyle{definition}

\theoremstyle{definition}

\def\eqref#1{equation~\ref{#1}}
\usepackage{graphicx}           % graphics support
\usepackage{tikz}               % TikZ graphics
\usepackage[edges]{forest}      % forest trees

\usepackage{url}                % simple URL typesetting
\usepackage{xurl}               % extended URL breaking

\usepackage{array}              % extended array and tabular
\usepackage{longtable}          % multi-page tables
\usepackage{multirow}           % multirow cells
\usepackage{makecell}           % enhanced cell formatting
\usepackage{ragged2e}           % ragged text alignment

\usepackage{mathtools}          % additional math tools
\usepackage{nicefrac}           % compact symbols for 1/2, etc.

\usepackage{algorithm}          % algorithm environment
\usepackage{algorithmicx}       % algorithmic pseudocode
\usepackage{algpseudocode}      % pseudocode formatting
\usepackage{listings}           % code listings

\usepackage{subcaption}         % subfigures and subcaptions
\usepackage{wrapfig}            % text wrapping around figures
\usepackage[export]{adjustbox}  % adjustable boxes

\usepackage{xspace}             % intelligent spacing
\usepackage[normalem]{ulem}     % underlining without affecting emphasis
\usepackage{CJKutf8}            % CJK character support

\usepackage[tikz]{bclogo}       % logo boxes
\usepackage[framemethod=tikz]{mdframed} % framed text

\usepackage{lipsum}             % lorem ipsum text
\usepackage{tocloft}            % table of contents formatting
\usepackage{afterpage}          % page break control
\usepackage{bbding}             % additional symbols
\usepackage{epigraph}           % epigraphs
\usepackage{minitoc}            % mini table of contents
\usepackage{multicol}           % multiple columns
\usepackage{textgreek}          % Greek text
\usepackage[capitalize,noabbrev]{cleveref}
\usepackage{sidecap}

\newcolumntype{P}[1]{>{\RaggedRight\arraybackslash}p{#1}}

\definecolor{uclablue}{RGB}{39, 116, 174}
\definecolor{bigaired}{RGB}{156, 0, 0}
\definecolor{myblue}{HTML}{598BE7}
\definecolor{mildblue}{RGB}{31,119,180}
\definecolor{sectionblue}{RGB}{70, 130, 180}
\definecolor{methodblue}{RGB}{0, 150, 136}
\definecolor{bgblue}{RGB}{245,243,253}
\definecolor{ttblue}{RGB}{91,194,224}
\definecolor{mygreen}{rgb}{0.64, 0.56, 0.88}
\definecolor{myyellow}{rgb}{0.68, 0.6, 0.1}
\definecolor{fancygreen}{rgb}{0.33, 0.68, 0.20}
\definecolor{salmon}{rgb}{0.94, 0.52, 0.49}
\definecolor{tablegreen}{rgb}{0.82, 0.94, 0.75}
\definecolor{tableblue}{rgb}{0.81, 0.90, 0.94}
\definecolor{tablered}{rgb}{0.97, 0.85, 0.85}
\definecolor{tableorange}{rgb}{0.96, 0.85, 0.81}
\definecolor{myorange}{rgb}{1.0, 0.49, 0.0}
\definecolor{tlgreen}{rgb}{0.33, 0.68, 0.20}
\definecolor{darkgreen}{RGB}{0,100,0}
\definecolor{darkred}{RGB}{200, 0, 0}
\definecolor{customyellow}{HTML}{FFFACD}
\definecolor{refinegreen}{RGB}{0, 128, 75}
\definecolor{scoregreen}{RGB}{34, 139, 34}
\definecolor{hidden-blue}{RGB}{194,232,247}
\definecolor{hidden-black}{RGB}{20,68,106}
\definecolor{yes}{HTML}{C6EFCE}
\definecolor{no}{HTML}{FFC7CE}
\definecolor{partial}{HTML}{FFEB9C}
\definecolor{external}{HTML}{D9E1F2}
\definecolor{hdr}{HTML}{F2F2F2}
\definecolor{GRPOrow}{gray}{0.96}
\definecolor{FlowRLrow}{RGB}{225,236,255}
\definecolor{FlowBlue}{RGB}{80,120,210}
\definecolor{GRPOGray}{gray}{0.35}

\hypersetup{
    colorlinks=true, 
    citecolor=uclablue, 
    linkcolor=bigaired,
    urlcolor=darkblue
}

\setlist[itemize]{leftmargin=20pt, noitemsep, topsep=0pt}

\NewDocumentCommand{\kaiyan}{mO{}}{\textcolor{purple}{\textsuperscript{\textit{kaiyan}}\textsf{\textbf{\small[#1]}}}}
\NewDocumentCommand{\yuxin}{mO{}}{\textcolor{cyan}{\textsuperscript{\textit{yuxin}}\textsf{\textbf{\small[#1]}}}}
\NewDocumentCommand{\bx}{mO{}}{\textcolor{green}{\textsuperscript{\textit{bx}}\textsf{\textbf{\small[#1]}}}}
\NewDocumentCommand{\at}{mO{}}{\textcolor{red}{\textsuperscript{\textit{AT}}\textsf{\textbf{\small[#1]}}}}
\NewDocumentCommand{\re}{mO{}}{\textcolor{blue}{\textsuperscript{\textit{RE}}\textsf{\textbf{\small[#1]}}}}
\NewDocumentCommand{\ybsun}{mO{}}{\textcolor{magenta}{\textsuperscript{\textit{youbang}}\textsf{\textbf{\small[#1]}}}}
\NewDocumentCommand{\runze}{mO{}}{\textcolor{orange}{\textsuperscript{\textit{runze}}\textsf{\textbf{\small[#1]}}}}
\NewDocumentCommand{\add}{mO{}}{\textcolor{darkgreen}{\textsuperscript{\textit{Maybe Consider Discuss}}\textsf{\textbf{[#1]}}}}

\newcommand{\cmark}{\textcolor{darkgreen}{\boldmath$\checkmark$}}
\newcommand{\xmark}{\textcolor{darkred}{\boldmath$\times$}}

\newcommand{\ours}{\textsc{CoDAR}~}
\newcommand{\ourslong}{\textbf{Co}ntinuous \textbf{D}iffusion with Contextual \textbf{A}uto\textbf{R}egressive Decoder~}

\newenvironment{itemize*}%
 {\leftmargini=10pt\begin{itemize}%
  \setlength{\itemsep}{0pt}%
  \setlength{\parskip}{0pt}%
  }%
 {\end{itemize}}

\newenvironment{enumerate*}%
 {\begin{enumerate}%
  \setlength{\itemsep}{0pt}%
  \setlength{\parskip}{0pt}}%
 {\end{enumerate}}

\newcommand{\cellstatus}[1]{%
  \begingroup
  \StrTrim{#1}[\statusval]%
  \IfStrEq{\statusval}{Yes}{\cellcolor{yes}\cmark}{}%
  \IfStrEq{\statusval}{No}{\cellcolor{no}\xmark}{}%
  \IfBeginWith{\statusval}{Yes (}{\cellcolor{yes}\cmark~\textit{\statusval\unskip}}{}%
  \IfStrEq{\statusval}{Partial}{\cellcolor{partial}\textbf{Partial}}{}%
  \IfStrEq{\statusval}{External}{\cellcolor{external}\textbf{External}}{}%
  \endgroup
}

\newtcolorbox{myboxi}[1][]{
  breakable,
  title=#1,
  colback=red!5,
  colbacktitle=red!5,
  coltitle=black,
  fonttitle=\bfseries,
  bottomrule=0pt,
  toprule=0pt,
  leftrule=2pt,
  rightrule=2pt,
  titlerule=0pt,
  arc=0pt,
  outer arc=0pt,
  colframe=red,
}

\newtcolorbox{myboxnote}[1][]{
  breakable,
  title=#1,
  colback=orange!0,
  colbacktitle=orange!0,
  coltitle=black,
  fonttitle=\bfseries,
  bottomrule=0pt,
  toprule=0pt,
  leftrule=2pt,
  rightrule=2pt,
  titlerule=0pt,
  arc=0pt,
  outer arc=0pt,
  colframe=orange,
}

\newtcolorbox{myboxii}[1][]{
  breakable,
  freelance,
  title=#1,
  colback=white,
  colbacktitle=white,
  coltitle=black,
  fonttitle=\bfseries,
  bottomrule=0pt,
  boxrule=0pt,
  colframe=white,
  overlay unbroken and first={
  \draw[red!75!black,line width=3pt]
    ([xshift=5pt]frame.north west) -- 
    (frame.north west) -- 
    (frame.south west);
  \draw[red!75!black,line width=3pt]
    ([xshift=-5pt]frame.north east) -- 
    (frame.north east) -- 
    (frame.south east);
  },
  overlay unbroken app={
  \draw[red!75!black,line width=3pt,line cap=rect]
    (frame.south west) -- 
    ([xshift=5pt]frame.south west);
  \draw[red!75!black,line width=3pt,line cap=rect]
    (frame.south east) -- 
    ([xshift=-5pt]frame.south east);
  },
  overlay middle and last={
  \draw[red!75!black,line width=3pt]
    (frame.north west) -- 
    (frame.south west);
  \draw[red!75!black,line width=3pt]
    (frame.north east) -- 
    (frame.south east);
  },
  overlay last app={
  \draw[red!75!black,line width=3pt,line cap=rect]
    (frame.south west) --
    ([xshift=5pt]frame.south west);
  \draw[red!75!black,line width=3pt,line cap=rect]
    (frame.south east) --
    ([xshift=-5pt]frame.south east);
  },
}

\tcbset{
  takeawaysbox/.style={
    title=Takeaways,
    colback=lightblue!80,
    colframe=black,
    fonttitle=\bfseries\small,
    coltitle=white,
    colbacktitle=black,
    enhanced,
    attach boxed title to top left={xshift=2.5mm,yshift=-2.5mm},
    boxed title style={rounded corners, size=small, colframe=black, colback=black},
    width=\linewidth,
    arc=3.5mm
  }
}

\mdfdefinestyle{mystyle}{%
  rightline=true,
  innerleftmargin=10,
  innerrightmargin=10,
  outerlinewidth=3pt,
  topline=false,
  rightline=true,
  bottomline=false,
  skipabove=\topsep,
  skipbelow=\topsep
}

\tikzset{%
    every node/.style={font=\tiny},
    parent/.style =          {align=center,text width=2cm,rounded corners=3pt, line width=0.3mm, fill=gray!10,draw=gray!80},
    child/.style =           {align=center,text width=2.0cm,rounded corners=3pt, fill=blue!10,draw=blue!80,line width=0.3mm},
    grandchild/.style =      {align=center,text width=2cm,rounded corners=3pt},
    greatgrandchild/.style = {align=center,text width=1.5cm,rounded corners=3pt},
    greatgrandchild2/.style = {align=center,text width=1.5cm,rounded corners=3pt},    
    referenceblock/.style =  {align=center,text width=1.5cm,rounded corners=2pt},
    pretrain/.style =           {align=center,text width=2.0cm,rounded corners=3pt, fill=blue!10,draw=blue!80,line width=0.3mm},   
    pretrain_work/.style =           {align=center, text width=8.5cm,rounded corners=3pt, fill=blue!10,draw=blue!0,line width=0.3mm},  
    template/.style =           {align=center,text width=2.0cm,rounded corners=3pt, fill=red!10,draw=red!80,line width=0.3mm},   
    template_work/.style =           {align=center,text width=8.5cm,rounded corners=3pt, fill=red!10,draw=red!0,line width=0.3mm},    
    answer/.style =           {align=center,text width=2.0cm,rounded corners=3pt, fill= cyan!10,draw= cyan!80,line width=0.3mm},   
    answer_work/.style =           {align=center,text width=8.5cm,rounded corners=3pt, fill= cyan!10,draw= cyan!0,line width=0.3mm},      
    multiple/.style =           {align=center,text width=2.0cm,rounded corners=3pt, fill= orange!10,draw= orange!80,line width=0.3mm},   
    multiple_work/.style =           {align=center,text width=8.5cm,rounded corners=3pt, fill= orange!10,draw= orange!0,line width=0.3mm},        
    tuning/.style =           {align=center,text width=2.0cm,rounded corners=3pt, fill= magenta!10,draw= magenta!80,line width=0.3mm},   
    tuning_work/.style =           {align=center,text width=8.5cm,rounded corners=3pt, fill= magenta!10,draw= magenta!0,line width=0.3mm},          
}

\newcommand{\lstbg}[3][0pt]{{\fboxsep#1\colorbox{#2}{\strut #3}}}

\lstdefinelanguage{diff}{
  basicstyle=\ttfamily\small,
  morecomment=[f][\lstbg{red!20}]-,
  morecomment=[f][\lstbg{green!20}]+,
}

\lstdefinelanguage{diffpython}{
  language=diff,
  morekeywords={def, if, else, for, while, return, import, from, as, class, with, try, except, finally, raise, lambda, and, or, not, in, is, None, True, False},
  morecomment=[l]{\#},
  morestring=[b]",
  morestring=[b]',
}

\setheadertext{LUMIA Lab}

\correspondingemail{\emailicon$^{1}$ junzheshen@sjtu.edu.cn \\ \emailicon$^{\ddagger}$ lin.zhouhan@gmail.com \quad $^\ddagger$ Corresponding Author. }

\title{CoDAR: Continuous Diffusion Language Models are More Powerful Than You Think}
\setheadertitle{Continuous Diffusion Language Models are More Powerful Than You Think}

\author{%
  \Authfont Junzhe Shen$^{1}$, Jieru Zhao$^{3}$, Ziwei He$^{2}$, Zhouhan Lin$^{1234\ddagger}$\\
  $^1$ LUMIA Lab, School of Artificial Intelligence, Shanghai Jiao Tong University \\
  $^2$ Shanghai Innovation Institute \\
  $^3$ School of Computer Science, Shanghai Jiao Tong University \\
  $^4$ Shanghai AI Laboratory
}

\begin{document}

% ====================
% ABSTRACT
% ====================
\begin{abstract}
We study why continuous diffusion language models (DLMs) have lagged behind discrete diffusion approaches despite their appealing continuous generative dynamics.
  Under a controlled token-recovery study, we identify token rounding, the final projection from denoised embeddings to tokens, as a primary bottleneck.
  Building on these insights, we propose \ours~(\ourslong), a two-stage framework that keeps diffusion entirely continuous in an embedding space while learning a strong, context-conditional discretizer: an autoregressive Transformer decoder that cross-attends to the denoised embedding sequence and performs contextualized rounding to tokens. Experiments on LM1B and OpenWebText demonstrate that \ours substantially improves generation quality over latent diffusion and becomes competitive with strong discrete DLMs, while exposing a simple decoder-temperature knob to navigate the fluency–diversity trade off.
\end{abstract}

% Generate title with LUMIA style formatting
\maketitle

% ====================
% YOUR PAPER CONTENT GOES HERE
% ====================
\begin{figure}[h]
\centering
\begin{minipage}{0.5\columnwidth}
    \includegraphics[width=\columnwidth]{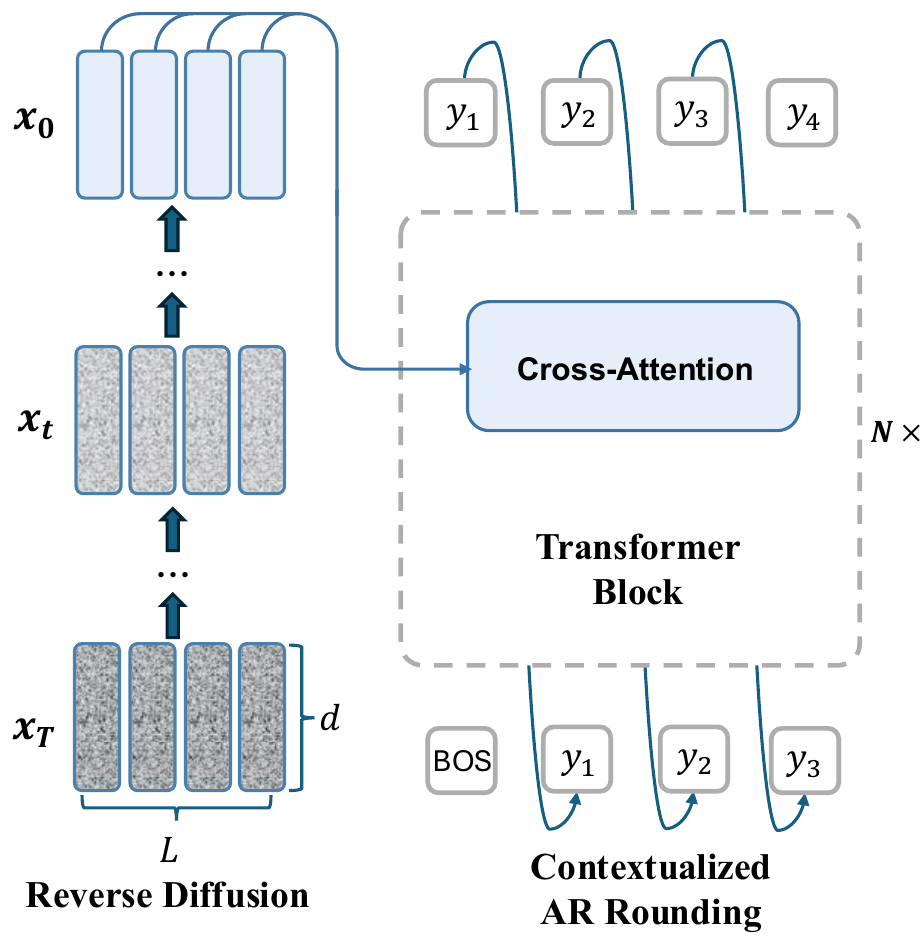}
\end{minipage}
% \hfill
\hspace{15pt}
\begin{minipage}{0.3\columnwidth}
    \caption{Framework of \ours. Starting from a noisy latent sequence $x_T$, a reverse diffusion process progressively denoises the hidden states to $x_0$. $\mathbf{x}_T,\ldots, \mathbf{x}_0 \in \mathrm{R}^{L\times d}$, where $L$ denotes the sequence length and $d$ denotes the size of hidden states. After that, an autoregressive Transformer decoder conditions on the denoised $\mathbf{x}_0$ with cross-attention to translate $\mathbf{x}_0$ to discrete tokens $\mathbf{y}_1, \ldots, \mathbf{y}_L$.}
    \label{fig:architecture}
\end{minipage}
\end{figure}
% \begin{figure}[h]
%     \centering
%     \includegraphics[width=0.55\columnwidth]{Figures/figrue1.pdf}
%     \caption{Framework of \ours. Starting from a noisy latent sequence $x_T$, a reverse diffusion process progressively denoises the hidden states to $x_0$. $\mathbf{x}_T,\ldots, \mathbf{x}_0 \in \mathrm{R}^{L\times d}$, where $L$ denotes the sequence length and $d$ denotes the size of hidden states. After that, an autoregressive Transformer decoder conditions on the denoised $\mathbf{x}_0$ with cross-attention to translate $\mathbf{x}_0$ to discrete tokens $\mathbf{y}_1, \ldots, \mathbf{y}_L$ }
%     \label{fig:architecture}
% \end{figure}
\section{Introduction}
Continuous diffusion models have achieved remarkable success in domains such as image generation and structured latent spaces, where they demonstrate strong modeling capacity and sample quality. However, their adoption and scalability in natural language processing remain limited. In contrast to continuous media, language is inherently discrete, leading to fundamental challenges when applying continuous generative processes directly to text. While continuous formulations offer theoretical advantages~\citep{zhou2025ccdd, pynadath2025candi}, including stronger theoretical expressivity and smoother latent reasoning, continuous space diffusion language models (DLMs) have fallen behind discrete DLMs~\citep{diffusionbert, survey}.

A core difficulty lies in the misalignment between continuous diffusion and the discrete nature of language: diffusion processes typically operate over continuous spaces, whereas linguistic units such as tokens are inherently categorical. Prior works on continuous DLMs have attempted to bridge this gap using a rounding step~\citep{diffusionlm, sed, genie, difformer} or formulating diffusion in the logit space~\citep{tess, tess2}. In parallel, embedding-based latent diffusion methods such as \textit{Latent Diffusion for Language Generation}~\citep{ld4lg} learn diffusion in the latent space of a language autoencoder built on a pretrained encoder--decoder LM. While this provides a principled continuous space that is decodable by construction, it ties the method to encoder--decoder language models and thus narrows the choice of representations.

Beyond the continuous--discrete mismatch, a second difficulty lies in the high dimensionality of latent representations. In latent diffusion for images, recent studies document an optimization dilemma where increasing latent capacity (often via higher-dimensional latents) improves reconstruction but can hinder diffusion training and generation quality unless additional alignment or regularization is introduced~\citep{yao2025reconstruction, lai2025toward, rae}. Although language differs from vision, analogous pressures can arise when diffusing over high-dimensional embedding sequences. Due to these difficulties, research largely pivoted toward discrete modeling strategies that match the token-level structure of language~\citep{sedd, mdlm} as the field matured.

We argue that the performance gap between continuous DLMs and discrete DLMs is not solely due to the diffusion objective itself, but rather to a \emph{rounding bottleneck}: the challenge of mapping noisy continuous embeddings back to discrete tokens in a large space. Most existing embedding-space DLMs typically rely on a rounding operator to recover tokens~\citep{diffusionlm, genie}. Such operators treat each position largely independently and provide limited ability to leverage linguistic context when the denoised embedding is ambiguous or off manifold. Conversely, discrete or simplex-space DLMs avoid explicit rounding but must operate directly in a categorical state space, which changes the learning dynamics and often shifts complexity into the diffusion transition design and sampling procedure~\citep{sedd, mdlm, tess2}.%These observations suggest that \emph{continuous diffusion and discrete decoding \redmark{should be} disentangled}: diffusion can focus on modeling a continuous latent sequence, while a dedicated, context-aware decoder performs the discrete projection. 

In this work, we analyze the rounding bottleneck of continuous DLMs both theoretically and empirically, and propose \ours(\ourslong), a novel framework that effectively resolves the two aforementioned problems by \emph{keeping diffusion entirely continuous} while \emph{learning a powerful, context-conditional rounding module}. Concretely, \ours factorizes generation into (i) a continuous diffusion process in an embedding space that is favorable for diffusion, and (ii) an autoregressive Transformer decoder that maps the generated continuous states to discrete tokens via cross-attention. This design keeps the diffusion component simple and fully continuous, and delegates the hardest part of decoding to a model class that is known to excel at sequence transduction.

We summarize our main contributions as follows:
\begin{itemize}
    % \item We identify \textbf{token decoding}\redmark{especially under low dim hidden states} as a principal bottleneck for continuous embedding DLMs and provide empirical evidence that naive pointwise classifiers are insufficient for accurate decoding, even when given strong hidden representations (\cref{sec:empirical}).
    \item We identify token rounding, especially under low dimensional hidden states, as a principal bottleneck for continuous embedding DLMs theoretically and empirically, showing why pointwise classifiers can be suboptimal and validating this with controlled token-recovery experiments.
    \item We propose \ours, a two-stage continuous diffusion language modeling framework: a continuous diffusion generator produces embedding sequences, and an autoregressive Transformer decoder performs contextual rounding back to tokens.
    \item We show that \ours improves generation quality over latent diffusion, and closes the gap between discrete DLMs.
\end{itemize}

\section{Related Work}
\subsection{Continuous Diffusion Language Models}

Diffusion-LM~\citep{diffusionlm} models sequences as Gaussian vectors that are iteratively denoised into word vectors, enabling plug-and-play controllable generation. Building on the embedding-space perspective, Self-conditioned Embedding Diffusion~\citep{sed} introduces self-conditioning for diffusion over token embeddings and demonstrates competitive generation quality with potential inference efficiency benefits. Difformer~\citep{difformer} analyzes optimization pathologies in embedding diffusion (e.g., embedding collapse and schedule issues) and proposes anchor loss and noise rescaling to stabilize training and improve generation across tasks. GENIE~\citep{genie} frames diffusion for pre-training, using an encoder plus diffusion-based decoder and a continuous paragraph denoise objective to reconstruct coherent paragraphs from corrupted inputs. LD4LG~\citep{ld4lg} trains diffusion in the latent space of an encoder–decoder language autoencoder, sampling compact continuous latents that are decoded by a pretrained decoder. More recent variants explore where to diffuse and how to better respect token discreteness: TESS~\citep{tess} performs diffusion on the logit simplex (rather than learned embeddings) with self-conditioning for fully non-autoregressive text-to-text generation. Smoothie~\citep{shabalin2025smoothie} proposes progressively smoothing token embeddings by semantic similarity to combine semantic structure with a more natural decoding process.

% \subsection{Discrete Diffusion Language Models}
% Discrete diffusion language models extend diffusion to categorical token spaces. D3PM~\citep{austin2021structured} formalize this view with flexible categorical transition matrices. SEDD\citep{sedd} proposes score entropy to estimate data-distribution ratios in discrete spaces, reporting large improvements over earlier diffusion paradigms. MDLM~\citep{mdlm} demonstrates that, with a modern training recipe and a Rao–Blackwellized objective, masked diffusion can substantially close the gap to autoregressive LMs. GIDD\citep{rutte2025generalized} generalizes masked diffusion by allowing the noising process to be expressed as a linear interpolation between data and a (possibly time-varying) mixing distribution, enabling hybrid corruption schemes that improve sample quality and restore the model’s ability to revise earlier mistakes. These algorithmic advances are now being pushed to scale, with LLaDA~\citep{nie2025llada} demonstrating diffusion LMs trained from scratch under a pre-training + SFT pipeline, Seed Diffusion~\citep{song2025seed} emphasizing large-scale diffusion decoding with high-speed inference, LLaDA2.0~\citep{bie2025llada20} reporting diffusion language models scaled up to 100B parameters, and Google DeepMind’s Gemini Diffusion~\citep{deepmind2025geminidiffusion} exploring diffusion-based text generation aimed at greater speed and user control.

\subsection{Hybrid Architectures}
\paragraph{AR-Diffusion Hybrid}A growing line of work explores hybrid diffusion–autoregressive architectures that aim to combine diffusion’s global refinement/parallelism with AR decoding’s fluency and KV-cache-friendly generation. AR-Diffusion~\citep{wu2023ardiffusion} injects causal, left-to-right structure into diffusion by using a position-dependent (dynamic) number of denoising steps so left tokens “settle” earlier and condition later ones. DGLM~\citep{lovelace2024dglm} uses a diffusion model to generate continuous semantic proposals (soft prompts) that steer a strong AR LM toward desired attributes. SDLM~\citep{liu2025sequential} further “retrofit” pretrained AR LMs by performing diffusion within masked blocks but decoding consecutive subsequences adaptively (via Next Sequence Prediction) to maintain KV-cache compatibility and handle variable uncertainty across the sequence. Moving toward single-model synergy, TiDAR~\citep{liu2025tidar} explicitly separates roles by drafting in diffusion while sampling final outputs autoregressively using structured attention masks to achieve high throughput without sacrificing AR-quality. For reasoning-centric settings, LaDiR~\citep{kang2025ladir} augments an existing LLM with a VAE-defined latent “thought” space and a latent diffusion model that iteratively refines blockwise reasoning trajectories, enabling more holistic revision than pure AR chain-of-thought. Planner and Executor~\citep{berrayana2025plannerexecutor} studies explicit collaboration where a discrete diffusion model plans and an AR model executes, showing that shifting diffusion to AR communication from text to latent space via a learned projector can markedly improve reasoning accuracy while reducing token-cost.
\paragraph{Continuous-Discrete Hybrid} Hybrid continuous–discrete diffusion LMs model tokens and continuous representations jointly, rather than diffusing in only one space. CCDD~\citep{zhou2025ccdd} co-evolves discrete tokens and continuous states in a single joint diffusion, aiming to preserve continuous latent expressivity while improving trainability via explicit discrete structure. CANDI~\citep{pynadath2025candi} decouples discrete identity corruption from continuous geometric degradation to avoid a mismatch in effective noise regimes, enabling useful continuous gradients while retaining conditional structure. CADD~\citep{cadd} augments mask-based discrete diffusion with a paired continuous latent that replaces the [MASK] “information void” with noisy but informative vectors that guide denoising and offer controllable diversity–precision trade-offs. Compared with these interleaved hybrid processes, our approach keeps diffusion entirely continuous in embedding space and delegates discretization to a separate contextualized decoder.

\section{Theoretical Analysis}%像实验的题目
\label{sec:empirical}
% \subsection{Theoretical Analysis}
Let $Y=(Y_1,\dots,Y_L)$ be a length-$L$ token sequence and let $X\in\mathbb{R}^{L\times d}$ denote the denoised continuous sequence produced by the diffusion generator. Any rounding (discretization) procedure is implicitly performing posterior inference:
\begin{equation}
\hat{y}\in \arg\max_{y}\; p(y\mid X).
\end{equation}
Many embedding-space DLMs implement rounding with a position-wise linear head~\citep{diffusionlm,difformer,genie}, which corresponds to the approximate factorization $p(y\mid X)\approx \prod_{i=1}^L p(y_i\mid X_i)$, treating token recovery as independent classification at each position $i$.

\paragraph{Entropy and conditional total correlation.}
Throughout, $H(\cdot)$ denotes Shannon entropy (in nats when $\log$ is natural). For discrete random variables,
\begin{equation}
H(Y\mid X)\;=\;\mathbb{E}_{x}\Big[-\sum_{y} p(y\mid x)\log p(y\mid x)\Big],
\end{equation}
and similarly $H(Y_i\mid X_i)$ is the remaining uncertainty of $Y_i$ after observing only the local vector $X_i$.
The \emph{conditional total correlation} (conditional TC) measures residual dependence among $(Y_1,\dots,Y_L)$ given $X$:
\begin{equation}
\begin{aligned}
\mathrm{TC}(Y\mid X)
&=
\mathbb{E}_{x}\Big[
D_{\mathrm{KL}}\Big(p(y\mid x)\;\Big\|\;\prod_{i=1}^L p(y_i\mid x)\Big)
\Big] \\
&=
\sum_{i=1}^L H(Y_i\mid X)\;-\;H(Y\mid X)
\;\ge\;0.
\end{aligned}
\end{equation}

\paragraph{Locality gap vs.\ dependence gap.}
Because $X$ contains (weakly) more information than $X_i$, conditioning reduces entropy:
$H(Y_i\mid X)\le H(Y_i\mid X_i)$. This yields a useful decomposition:

\begin{equation}
\begin{aligned}
& \sum_{i=1}^L H(Y_i\mid X_i)\;-\;H(Y\mid X) =
\underbrace{\Big(\sum_{i=1}^L H(Y_i\mid X)\;-\;H(Y\mid X)\Big)}_{\mathrm{TC}(Y\mid X)} + \underbrace{\sum_{i=1}^L\Big(H(Y_i\mid X_i)-H(Y_i\mid X)\Big)}_{\text{locality gap}} 
\ge\;\mathrm{TC}(Y\mid X).
\label{eq:tc_locality_decomp}
\end{aligned}
\end{equation}
Intuitively, $\mathrm{TC}(Y\mid X)$ captures \emph{intrinsic sequence coupling} (syntax/semantics, long-range constraints) that remains even if the decoder sees the entire $X$, whereas the locality gap captures extra uncertainty introduced when a decoder is restricted to per-position evidence $X_i$ rather than the full context $X$.
\begin{proposition}[Optimality gap of pointwise decoding]
\label{prop}
Let $\mathcal{D}_{\mathrm{pw}}$ be the set of all decoders that factorize as $\prod_{i=1}^L q_i(y_i\mid X_i)$, and let $\mathcal{D}_{\mathrm{seq}}$ be the set of all conditional sequence decoders $q(y\mid X)$. Consider the expected negative log-likelihood (NLL) risk
\begin{equation}
\mathcal{R}(q)=\mathbb{E}_{(X,Y)}[-\log q(Y\mid X)].
\end{equation}
Then,
\begin{equation}
\begin{aligned}
\min_{q\in \mathcal{D}_{\mathrm{pw}}}\mathcal{R}(q)\;-\;\min_{q\in \mathcal{D}_{\mathrm{seq}}}\mathcal{R}(q)
&\;=\;
\sum_{i=1}^L H(Y_i\mid X_i)\;-\;H(Y\mid X) \\
&\;\ge\;
\mathrm{TC}(Y\mid X)
\;\ge\;0.
\label{eq:opt_gap}
\end{aligned}
\end{equation}
\end{proposition}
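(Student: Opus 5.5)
The plan is to compute each of the two minima exactly using Gibbs' inequality, subtract, and then read off the lower bound from the decomposition \eqref{eq:tc_locality_decomp}.

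\textbf{Step 1: the sequence class.} For any $q\in\mathcal{D}_{\mathrm{seq}}$, condition on $X$ and write the risk as
\[
\mathcal{R}(q)=\mathbb{E}_X\big[H(p(\cdot\mid X))+D_{\mathrm{KL}}(p(\cdot\mid X)\,\|\,q(\cdot\mid X))\big]=H(Y\mid X)+\mathbb{E}_X[D_{\mathrm{KL}}].
\]
The KL term is nonnegative and vanishes at $q=p(y\mid X)$, which belongs to $\mathcal{D}_{\mathrm{seq}}$. Hence $\min_{\mathcal{D}_{\mathrm{seq}}}\mathcal{R}=H(Y\mid X)$.

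\textbf{Step 2: the pointwise class.} For $q=\prod_i q_i(y_i\mid X_i)$, the log-loss is additive:
\[
\mathcal{R}(q)=\sum_{i=1}^L\mathbb{E}_{(X_i,Y_i)}[-\log q_i(Y_i\mid X_i)].
\]
Each summand depends only on the joint law of $(X_i,Y_i)$ and only on its own $q_i$. Because the $q_i$ are chosen independently, the minimum over the product class is the sum of the per-position minima. The same Gibbs argument as in Step 1, applied to each position, gives per-position minimum $H(Y_i\mid X_i)$, attained at $q_i=p(y_i\mid X_i)$. Therefore $\min_{\mathcal{D}_{\mathrm{pw}}}\mathcal{R}=\sum_i H(Y_i\mid X_i)$.

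Subtracting the two minima yields the equality in \eqref{eq:opt_gap}.

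\textbf{Step 3: the lower bound.} Invoke \eqref{eq:tc_locality_decomp}: the gap equals $\mathrm{TC}(Y\mid X)$ plus the locality gap. Each locality term is $H(Y_i\mid X_i)-H(Y_i\mid X)=I(Y_i;X_{-i}\mid X_i)\ge0$, because conditioning reduces entropy. This gives the gap $\ge\mathrm{TC}(Y\mid X)$. Finally, $\mathrm{TC}(Y\mid X)\ge0$ because it is an expected KL divergence, as in its definition.

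\textbf{Main obstacle.} The delicate point is Step 2, justifying that the minimum is attained and that the true marginals $p(y_i\mid X_i)$ are admissible members of the class. I would state standing assumptions: $Y$ is discrete, all entropies are finite, and $p(y_i\mid x_i)$ is measurable in $x_i$, so the minimizers lie in the respective classes. If those assumptions fail, I would replace $\min$ by $\inf$; the same equalities then hold, since the Gibbs lower bound is still achieved by the true conditionals. The remaining work is routine bookkeeping: the chain-rule identity behind $\mathrm{TC}$ was already given in its definition.
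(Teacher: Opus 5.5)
Your proposal is correct and follows essentially the same route as the paper's proof. Both use the cross-entropy/KL decomposition to get $H(Y\mid X)$ for the sequence class, additivity of the log-loss with a per-position Gibbs argument to get $\sum_i H(Y_i\mid X_i)$ for the pointwise class, and then conditioning-reduces-entropy plus nonnegativity of the expected KL for the $\mathrm{TC}(Y\mid X)$ bound; your rewriting of each locality term as $I(Y_i;X_{-i}\mid X_i)$ and your standing-assumption remark (finiteness is automatic here since $Y$ ranges over the finite set $\mathcal{V}^L$) are harmless refinements.
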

\paragraph{Proof sketch.}
The Bayes-optimal conditional model for $\mathcal{D}_{\mathrm{seq}}$ is $q^*(y\mid X)=p(y\mid X)$, achieving risk $H(Y\mid X)$. For $\mathcal{D}_{\mathrm{pw}}$, the minimizer is $q_i^*(y_i\mid X_i)=p(y_i\mid X_i)$ independently for each $i$, achieving $\sum_i H(Y_i\mid X_i)$. Subtracting yields~\eqref{eq:opt_gap}. The lower bound by $\mathrm{TC}(Y\mid X)$ follows from~\eqref{eq:tc_locality_decomp}. Refer to Appendix \ref{sec:proof} for detailed proof.
% The perspective that factorized training drops dependencies measured by conditional TC is closely related to NAT learning theory: :contentReference[oaicite:1]{index=1}

\paragraph{Interpretation and implications for rounding.}
Equation~\eqref{eq:opt_gap} formalizes two reasons why a linear LM head can be brittle for rounding diffusion outputs:
(i) \emph{sequence dependence:} the gap vanishes only if $Y_1,\dots,Y_L$ are conditionally independent given $X$ (i.e., $\mathrm{TC}(Y\mid X)=0$), which is rarely true for natural language;
(ii) \emph{local evidence restriction:} even if a full-context decoder could in principle exploit $X$ to resolve ambiguity, a per-position head that only sees $X_i$ incurs the locality gap in~\eqref{eq:tc_locality_decomp}.
In realistic diffusion sampling, $X$ is an imperfect denoised embedding sequence: it may be off-manifold, slightly inconsistent across positions, or contain structured errors. Such imperfections typically increase both conditional dependence (larger $\mathrm{TC}(Y\mid X)$) and locality gap, making point-wise decoding strictly suboptimal.

\paragraph{Why increasing $d$ helps but does not eliminate the issue.}
A larger embedding dimension $d$ increases the capacity of each $X_i$ to encode information about $Y_i$. Formally,
\begin{equation}
H(Y_i\mid X_i)\;=\;H(Y_i)\;-\;I(Y_i;X_i),
\end{equation}
so improving the mutual information $I(Y_i;X_i)$ (often easier when $d$ is larger) reduces \emph{marginal} ambiguity.  However, lowering these marginal ambiguities does not imply that sequence recovery becomes pointwise: the approximation $p(y\mid X)\approx \prod_i p(y_i\mid X_i)$ would require that (i) tokens become conditionally independent given the full $X$, and (ii) local evidence $X_i$ is as informative as the global context $X$. In general, neither holds. Moreover, even if the clean embedding manifold were perfectly invertible, diffusion-generated $X$ can be slightly off-manifold. In that case, $X$ may be consistent with multiple nearby token sequences. Choosing the one that is globally coherent requires reasoning over the whole sequence, not each position independently. This explains why improving per-position separability (e.g., by increasing $d$) can improve linear head accuracy, yet still leaves a substantial gap to contextual decoding.
% resolving which globally consistent sentence best explains such $X$ is inherently a \emph{sequence-level} inference problem. This explains why improving per-position separability (e.g., by increasing $d$) can improve linear-head accuracy, yet still leaves a substantial gap to contextual decoding.

\begin{figure}[h!]
    \centering
    \includegraphics[width=0.5\columnwidth]{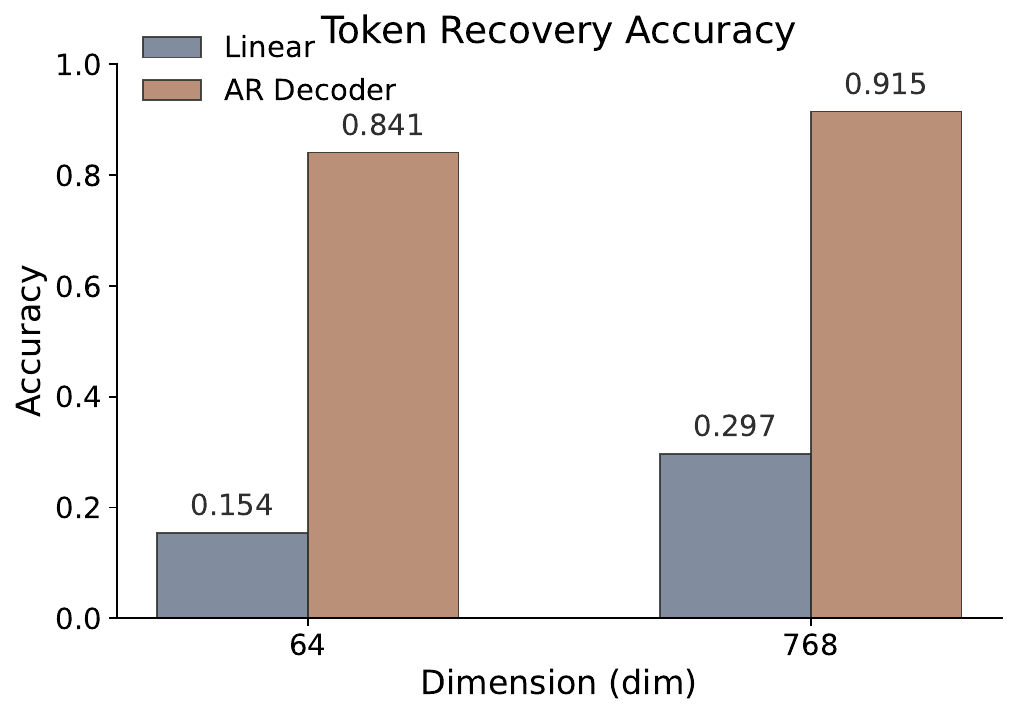}
    \caption{Token recovery rate of point-wise linear classifier and autoregressive Transformer decoder under different sizes of hidden states.}
    \label{fig:token_recovery}
\end{figure}

To test whether the theoretical gap in~\eqref{eq:opt_gap} is practically significant, we run a controlled study. We train two families of token decoders to map hidden states $h_i$ back to tokens $x_i$:
(i) a position-wise linear classifier $p(x_i \mid h_i)$; and
(ii) an autoregressive Transformer decoder that predicts $x_i$ conditioned on previously recovered tokens and cross-attends to the full continuous sequence $H = [h_1,\ldots,h_L]$. 
For each position, we select the token with highest predicted probability as the predicted token, and token recovery is computed as the rate at which the predicted token matches the true token.
The token recovery accuracy in shown in \cref{fig:token_recovery}.

\paragraph{Key observation.}
As shown in \cref{fig:token_recovery}, the Transformer decoder recovers tokens with high accuracy across both low- and high-dimensional representations (e.g., $0.841$ at $d{=}64$ and $0.915$ at $d{=}768$), while the Linear baseline performs poorly ($0.154$ and $0.297$, respectively). This is consistent with~\eqref{eq:opt_gap}: a linear head is constrained to $\mathcal{D}_{\mathrm{pw}}$ and therefore cannot exploit sequence-level coupling (nonzero $\mathrm{TC}(Y\mid X)$) nor full-context evidence (locality gap), both of which are crucial for reliable rounding when $h_i$ are imperfect.

These results motivate our two-stage approach, where continuous diffusion generation and contextual token rounding are decoupled and separately learned:
\begin{itemize}
    \item a continuous diffusion model generates a sequence of token embeddings in $\mathbb{R}^{L\times d}$;
    \item an autoregressive Transformer decoder with cross-attention maps the generated embeddings back to discrete tokens.
\end{itemize}
% This design targets the trainability bottleneck noted for continuous DLMs (large decision space + decoding difficulty)~\citep{zhou2025ccdd} by explicitly learning a dedicated embedding-to-token decoder, while keeping diffusion entirely continuous. In particular, diffusion no longer needs to land exactly on token embeddings at every position; it only needs to generate continuous states that are \emph{decodable under context}, allowing the decoder to leverage linguistic regularities to resolve residual dependence and local ambiguity predicted by~\eqref{eq:tc_locality_decomp}.
This design addresses the previously noted trainability bottleneck by explicitly learning a dedicated embedding-to-token decoder, while keeping the diffusion process entirely continuous and allowing free choice of embeddings. In particular, diffusion no longer needs to land exactly on token embeddings at every position; it only needs to generate continuous states that are \emph{decodable under context}, allowing the decoder to leverage linguistic regularities to resolve residual dependence and local ambiguity predicted by~\eqref{eq:tc_locality_decomp}.

\section{\ourslong}% continuous diffusion language model in latent space

Let $\mathbf{y}=(y_1,\dots,y_L)$ be a token sequence with vocabulary size $|V|$.
A frozen embedding model $E: V^L \rightarrow \mathbb{R}^{L\times d}$ maps tokens to continuous
representations $\mathbf{x}_0 = E(\mathbf{y})$.
We denote the diffusion state at time $t\in[0,1]$ as $\mathbf{x}_t \in \mathbb{R}^{L\times d}$.
Our denoiser $f_\theta(\mathbf{x}_t,t)$ operates purely in the continuous space, while a separate
autoregressive decoder $p_\phi(\mathbf{y}\mid \hat{\mathbf{x}}_0)$ performs discrete realization.
Unless stated otherwise, $E(\cdot)$ is fixed and only $(\theta,\phi)$ are optimized. The framework of \ours is shown in \cref{fig:architecture}.

\subsection{Continuous Diffusion for Embedding Generation}

Given a tokenized sequence $\mathbf{y}=(y_1,\dots,y_L)$, we obtain continuous embeddings
$$
\mathbf{x}_0 = E(\mathbf{y}) \in \mathbb{R}^{L\times d},
$$

where $E(\cdot)$ can be an arbitrary pretrained text embedding model.

We define a variance-preserving (VP) continuous diffusion process\citep{song2020score} on embeddings by corrupting $\mathbf{x}_0$ with Gaussian noise:
$$
\mathbf{x}_t = \alpha(t)\mathbf{x}_0 + \sigma(t)\boldsymbol{\epsilon},\quad \boldsymbol{\epsilon}\sim\mathcal{N}(\mathbf{0},\mathbf{I}),
$$

where $t\sim \mathcal{U}(0,1)$, and $\alpha(t),\sigma(t)$ are a noise schedule satisfying $\alpha(t)^2+\sigma(t)^2=1$. We use cosine schedule proposed by \citet{iddpm}.

We use the velocity parameterization~\citep{salimans2022progressive} because it interpolates between predicting noise and predicting data, and has been shown to improve stability—especially when sampling with few steps—relative to
direct $\epsilon$-prediction. Concretely, we train the denoiser to predict the \textit{velocity}:
$$
\mathbf{v}_t \triangleq \alpha(t)\boldsymbol{\epsilon} - \sigma(t)\mathbf{x}_0,
$$
We parameterize the denoiser as
$$
\hat{\mathbf{v}}_\theta = f_\theta(\mathbf{x}_t, t),
$$
and recover estimates of $\mathbf{x}_0$ and $\boldsymbol{\epsilon}$ via 
$$
\hat{\mathbf{x}}_0 = \alpha(t)\mathbf{x}_t - \sigma(t)\hat{\mathbf{v}}_\theta,\qquad
\hat{\boldsymbol{\epsilon}} = \sigma(t)\mathbf{x}_t + \alpha(t)\hat{\mathbf{v}}_\theta.
$$

Under the VP formulation with v‑prediction, the denoiser’s objective is to align its output $\hat{\mathbf{v}}_\theta$ with the ground‑truth velocity $\mathbf{v}_t = \alpha(t)\boldsymbol{\epsilon} - \sigma(t)\mathbf{x}_0$. We optimize the following velocity prediction loss:

$$
\mathcal{L}_{\text{diff}}(\theta)
= \mathbb{E}_{t\sim \mathcal{U}(0,1),\,\mathbf{x}_0,\,\boldsymbol{\epsilon}\sim\mathcal{N}(0,I)}
\left[w(t)\cdot \left\lVert f_\theta(\mathbf{x}_t, t) - \mathbf{v}_t\right\rVert_2^2\right],
$$
where $w(t)$ is a time‑dependent weighting function that can improve optimization stability (for example, constant weighting or SNR‑based schemes used in image diffusion literature\citep{ddpm, iddpm, karras2022elucidating}).

\subsection{Contextualized Rounding with AR Decoder}
To map the generated continuous embeddings back to text, we employ a Transformer decoder $p_\phi$ that uses cross‑attention over the denoised embedding sequence. Formally, the conditional likelihood of a token sequence $\mathbf{y}=(y_1,\dots,y_L)$ given the recovered embeddings is factorized as
$$
p_\phi(\mathbf{y}\mid \hat{\mathbf{x}}_0)=\prod_{i=1}^L p_\phi(y_i \mid y_{< i}, \hat{\mathbf{x}_0}),
$$

This two-stage strategy first runs diffusion in a continuous latent space, and then decodes the result into discrete text. This is similar to the latent diffusion for language setup in LD4LG~\citep{ld4lg}, which combines a latent model with a encoder-decoder language model to produce text from latent vectors. Unlike LD4LG, we do not need to rely on an encoder–decoder language model and can use off-the-shelf, state-of-the-art text embedding models instead.

Under teacher forcing, the decoder is trained to reconstruct the ground‑truth tokens from the (ideally denoised) continuous embeddings with the standard cross‑entropy objective:
$$
\mathcal{L}(\phi;\theta)=
\mathbb{E}\left[-\sum_{i=1}^{L}\log p_\phi(y_i\mid y_{<i}, \mathbf{x}_0)\right].
$$
To improve robustness and generalization to the imperfect outputs produced by the diffusion model, we follow the noise‑augmentation strategy from Representation Autoencoder~\citep{rae} training: we add a small Gaussian perturbation $\mathbf{n}\sim\mathcal{N}(0,\sigma^2\mathbf{I})$ to the recovered embeddings during decoder training. This encourages the decoder to tolerate and correct slight deviations from clean embeddings

The resulting noise‑augmented decoding loss becomes
$$
\mathcal{L}_{\text{dec}}(\phi;\theta)=
\mathbb{E}\left[-\sum_{i=1}^{L}\log p_\phi(y_i\mid y_{<i}, \mathbf{x}_0 + \mathbf{n})\right],
$$
which trains the decoder to be resilient to small perturbations in its conditioning embeddings and better aligned with the outputs of the diffusion denoiser.

\subsection{Inference}
At inference time, generation proceeds in two stages. First, we sample a continuous embedding sequence by running the learned reverse diffusion process: starting from Gaussian noise $\mathbf{x}_{t=1}\sim\mathcal{N}(0,I)$, we iteratively apply the denoiser $f_\theta(\mathbf{x}_t,t)$ (with a chosen numerical solver) to progressively remove noise and obtain a final denoised embedding sequence $\mathbf{x}_0\in\mathbb{R}^{L\times d}$. Second, we translate these hidden states into tokens using the autoregressive Transformer decoder $p_\phi$ conditioning on $\hat{\mathbf{x}}_0$ via cross-attention. The decoder samples $y_i \sim p_\phi(y_i \mid y_{<i}, \hat{\mathbf{x}}_0)$ until an end-of-sequence token is produced (or a length limit is reached). In this way, diffusion handles global continuous-sequence generation, while the AR decoder performs discrete token realization from the generated hidden states.
% \begin{algorithm}[tb]
%   \caption{Unconditional Generation}
%   \label{alg:uncond_gen}
%   \begin{algorithmic}
%     \REQUIRE Time grid $1 = t_0 > t_1 > \dots t_K = 0$, Gen. Length $L$, batch size $b$, hidden size $h$
%     \STATE Initialize: $x_t \sim \mathcal{N}(\mathbf{0},\mathbf{I}) \in \mathbf{R}^{b\times L\times h}$, tokens $ = \text{[bos]} \in \mathbf{Z}^{b\times 1}$
%     \FOR{$k=0$ {\bfseries to} $K$}
%         \STATE diffusion
%     \ENDFOR
%     \FOR{$l=1$ {\bfseries to} $L$}
%         \STATE ar
%     \ENDFOR
%   \end{algorithmic}
% \end{algorithm}

\section{Experiment}
% \begin{table*}[ht!]
% \renewcommand\arraystretch{1.15}
% \centering
% \caption{Unconditional generation on OpenWebText. We compare \ours to discrete baselines (MDLM and SEDD) using 1,000 generated samples per method, reporting generative perplexity and an n-gram–based diversity score. We also include three diagnostic reference points: training set text, decoded ground-truth embeddings (“recovered training set”), and decoded Gaussian noise, to contextualize the metrics.\redmark{250 steps, tokenizer}}
% \label{tab:main_table1}
% \begin{tabular}{lc|ll}
% \toprule
% Model                  & Decoder Temperature & Generative PPL($\downarrow$) & Diversity($\uparrow$) \\ \midrule
% Training set           & -                   & 16.75   & 0.2191    \\
% Recovered training set & -                   & 25.07   & 0.2282    \\
% Decoded noise          & -                   & 14.24   & 0.0380    \\ \midrule
% MDLM$^*$~\citep{mdlm}& -                   & 123.73  & 0.4784    \\
% SEDD~\citep{sedd}& -                   & 129.57  & 0.4742    \\ \midrule
% \ours\redmark{single column}        & 0.00                & 47.71   & 0.1660    \\
%                        & 0.25                & 50.68   & 0.1937    \\
%                        & 0.50                & 66.31   & 0.2670    \\
%                        & 0.75                & 109.80  & 0.3718    \\
%                        & 1.00                & 164.90  & 0.4842    \\ \bottomrule
% \end{tabular}
% \end{table*}

\begin{table}[ht!]
\renewcommand\arraystretch{1.2}
\centering
\caption{Unconditional generation on OpenWebText. We compare \ours to discrete baselines (MDLM and SEDD) using 1,000 generated samples per method, reporting generative perplexity and an n-gram–based diversity score. We also include three diagnostic reference points: training set text, decoded ground-truth embeddings (“recovered training set”), and decoded Gaussian noise, to contextualize the metrics. We use 250 sampling steps for all runs while varying the decoder temperature for \ours.}
\label{tab:main_table1}
\begin{tabular}{l|ll}
\toprule
Model                  & Gen. PPL($\downarrow$) & Diversity($\uparrow$) \\ \midrule
Training set           & 16.75                        & 0.2191                \\
Recovered training set & 25.07                        & 0.2282                \\
Decoded noise          & 14.24                        & 0.0380                \\ \midrule
MDLM$^*$  & 123.73                       & 0.4784                \\
SEDD      & 129.57                       & 0.4742                \\ \midrule
\ours($T=0.00$)        & 47.71                        & 0.1660                \\
\ours($T=0.25$)        & 50.68                        & 0.1937                \\
\ours($T=0.50$)        & 66.31                        & 0.2670                \\
\ours($T=0.75$)        & 109.80                       & 0.3718                \\
\ours($T=1.00$)        & 164.90                       & 0.4842                \\ \bottomrule
\end{tabular}
\end{table}
\subsection{Setup}
We evaluate the language modeling capability of our model via unconditional text generation, focusing on the quality of generated samples. We compare \ours against a latent diffusion language model (LD4LG~\citep{ld4lg}) and two strong discrete diffusion baselines, MDLM~\citep{mdlm} and SEDD~\citep{sedd}. For all comparisons, we report generative perplexity (Gen.\ PPL) as a proxy for fluency (lower is better), and an n-gram–based diversity metric as a proxy for lexical variety (higher is better) following previous work~\citep{ld4lg}.

To compare against the latent diffusion baseline LD4LG, we train \ours and LD4LG on the One Billion Word Benchmark (LM1B)~\citep{lm1b} using the standard data split. We train models using sequence length $L = 128$ with sentence packing. We train both models for 250k steps. For LD4LG, we use the BART variant and follow the baseline’s standard autoencoder setup and training procedure as reported in the original work.

For the comparison with discrete baselines, we train all models on the OpenWebText~\citep{owt} dataset. Since OpenWebText is substantially larger and more diverse than LM1B, it provides a stronger stress test for unconditional generation quality. We train \ours for 250k steps using context length $L=512$ with sequence packing. For a fair comparison, we retrain MDLM and SEDD following prior practice ~\citep{mdlm, sedd, duo} using standard training iterations (1M steps) but matching our context length of 512. We use the Qwen2 tokenizer~\citep{qwen3emb} for \ours and for baselines when possible. LD4LG cannot use the Qwen2 tokenizer because its BART-based autoencoder is tied to the BART vocabulary. MDLM does not work well with Qwen2 tokenizer (resulting in very high generative perplexity), which is consistent with observations in \citet{zhou2025ccdd}. We therefore use the GPT-2 tokenizer for MDLM (marked as $^*$ in \cref{tab:main_table1} and \cref{tab:abl_steps}).

\paragraph{Evaluation metrics.} For each method, we generate 1,000 unconditional samples and compute generative perplexity (Gen.\ PPL) computed by GPT2-large~\citep{gpt2} to measure fluency. To quantify diversity, we use the metric used in LD4LG, defined as the product of distinct n-gram ratios for $n \in \{2, 3, 4\}$:
$$
\text{Div} = \prod_{n=2}^4 \frac{|\text{unique $n$-grams}(\{\mathbf{w}_i\})|}{|\text{total $n$-grams}(\{\mathbf{w}_i\})|},
$$
where $\{\mathbf{w}_i\}$ is a set of generated samples.

To contextualize the generative perplexity (Gen.\ PPL) and diversity scores, we report three reference points.
\begin{itemize}
    \item Training set: Text drawn directly from the \emph{training set}, serving as a data-distribution anchor under our evaluation protocol.% achieves Gen.\ PPL $16.75$ and diversity $0.2191$, which serves as a data-distribution anchor under our evaluation protocol.
    \item Recovered training set: obtained by decoding ground-truth text embeddings, reflecting reconstruction error induced by the decoder. %yield Gen.\ PPL $25.07$ and diversity $0.2282$, indicating a non-trivial reconstruction gap attributable to the decoder/representation interface rather than the diffusion prior.
    \item Decoded noise: Decoding pure Gaussian noise with our contextualized decoder, which exhibit low Gen.\ PPL but extremely low diversity, indicative of a degenerate mode of highly repetitive text.%attains an apparently low Gen.\ PPL of $14.24$, but with extremely low diversity ($0.0380$), revealing a degenerate failure mode where the decoder can emit locally fluent yet highly repetitive text when unconditioned.
\end{itemize}
These probes show that (i) the decoder is capable of producing fluent text, but (ii) a meaningful generative model must match \emph{both} fluency and diversity, not fluency alone.

\paragraph{Training Details} For the diffusion model, we use the same model architecture as MDLM\citep{mdlm}. For the contextualized autoregressive decoder, we use the same model architecture as GPT2-small~\citep{gpt2} but with additional cross attention. The decoder is trained for 1 epoch on OpenWebText. We choose Qwen3-Embedding~\citep{qwen3emb} as \ours's pretrained embedding as it supports custom dimensions for the final embedding. We set the dimension of embeddings to 64 unless otherwise specified. All embeddings are normalized following \citet{rombach2022high}.

\begin{table}[H]
\renewcommand\arraystretch{1.2}
\centering
\caption{Unconditional generation on LM1b.}
\label{tab:main_table2}
\begin{tabular}{l|ll}
\toprule
Model & Gen.PPL($\downarrow$) & Diversity($\uparrow$) \\ \midrule
LD4LG & 167.47  & 0.5797    \\
\ours  & 104.76  & 0.3264    \\ \bottomrule
\end{tabular}
\end{table}
\subsection{Main Results}

On OpenWebText unconditional generation (\cref{tab:main_table1}), \ours spans a smooth fluency--diversity frontier by varying the decoder temperature while keeping the model and procedure fixed. At low temperatures, \ours is markedly more fluent than discrete baselines: Gen.\ PPL drops to $47.71$ ($T{=}0.00$) and $50.68$ ($T{=}0.25$), yet diversity remains non-trivial ($0.1660$--$0.1937$) and far from the collapse of decoded noise (diversity $0.0380$). As temperature increases, diversity rises monotonically ($0.2670 \rightarrow 0.3718 \rightarrow 0.4842$ for $T{=}0.50,0.75,1.00$) with a corresponding increase in Gen.\ PPL ($66.31 \rightarrow 109.80 \rightarrow 164.90$), forming a clear Pareto trade-off. Crucially, at $T{=}1.00$ \ours reaches diversity $0.4842$, matching or slightly exceeding MDLM ($0.4784$) and SEDD ($0.4742$), showing that \ours can operate in the same diversity regime as strong discrete counterparts, while offering substantially better fluency when the operating point favors it. On LM1B unconditional generation, we reuse the decoder trained with OpenWebText and the decoder's temperature is set to 1. The results are shown in \cref{tab:main_table2}, \ours significantly outperforms LD4LG in terms of fluency while maintaining nontrivial diversity.

\subsection{Ablations}
\subsubsection{Sampler and Sampling Steps}

As the diffusion process of \ours is fully continuous, we can leverage higher-order numerical solvers to improve few-step sampling. We compare standard ancestral sampling to DPM-Solver~\citep{dpmsolver} while varying the number of sampling steps from 250 down to 25. We fix the decoder temperature to 1.0 for all runs. The results are shown in \cref{tab:abl_sampler}.

Across all step budgets, DPM-Solver consistently yields better fluency (lower Gen.\ PPL) than ancestral sampling while maintaining high diversity. For example, at 250 steps, DPM-Solver reduces Gen.\ PPL from 164.90 (ancestral) to 147.53, with comparable diversity. The advantage is even more pronounced in the low-step regime: at 100 steps, DPM-Solver achieves 154.83 Gen.\ PPL versus 185.91 for ancestral sampling, while keeping diversity near 0.495. Even at 25 steps, DPM-Solver preserves strong diversity and slightly improves Gen.\ PPL (212.32 vs. 214.86). Overall, these results show that advanced solvers make \ours substantially more effective for fast generation, improving sample quality without sacrificing diversity.
\begin{table}[H]
\renewcommand\arraystretch{1.2}
\centering
\caption{Solver ablation for \ours. We compare ancestral sampling to DPM-Solver for unconditional generation across sampling budgets with decoder temperature set to 1.0.}
\label{tab:abl_sampler}
\begin{tabular}{l|lll}
\toprule
Solver                      & Sampling Steps & Gen.PPL($\downarrow$) & Div.($\uparrow$)   \\ \midrule
\multirow{4}{*}{Ancestral}  & 25                   & 214.86  & 0.4251 \\
                            & 50                   & 206.54 & 0.4734 \\
                            & 100                  & 185.91  & 0.4757 \\
                            & 250                  & 164.89 & 0.4842 \\ \midrule
\multirow{4}{*}{DPM-Solver} & 25                   & 212.32  & 0.4929 \\
                            & 50                   & 178.82  & 0.4942 \\
                            & 100                  & 154.83  & 0.4947 \\
                            & 250                  & 147.53  & 0.488  \\ \bottomrule
\end{tabular}
\end{table}
% Please add the following required packages to your document preamble:
% \usepackage{multirow}
\begin{table}[h]
\renewcommand\arraystretch{1.2}
\centering
\caption{Few-step unconditional generation on OpenWebText. We compare our model sampled with DPM-Solver (decoder temperature $T=1$) against discrete diffusion baselines (MDLM, SEDD) under matched step budgets.}
\label{tab:abl_steps}
\begin{tabular}{l|lll}
\toprule
Sampling Steps       & Model     & Gen.PPL($\downarrow$) & Div.($\uparrow$)   \\ \midrule
\multirow{3}{*}{25}  & MDLM$^*$      & 232.78  & 0.5287 \\
                     & SEDD      & 221.63  & 0.5171  \\
                     & \ours & 212.32  & 0.4929 \\ \midrule
\multirow{3}{*}{50}  & MDLM$^*$      & 165.71  & 0.5046 \\
                     & SEDD      & 164.24  & 0.492 \\
                     & \ours      & 178.82  & 0.4942 \\ \midrule
\multirow{3}{*}{100} & MDLM$^*$      & 137.62  & 0.4877 \\
                     & SEDD      & 143.19  & 0.481 \\
                     & \ours & 154.83  & 0.4947 \\ \midrule
\multirow{3}{*}{250} & MDLM$^*$      & 123.73  & 0.4784 \\
                     & SEDD      & 131.96  & 0.4742 \\
                     & \ours & 147.53  & 0.488  \\ \bottomrule
\end{tabular}
\end{table}
Now we rival discrete baselines (MDLM/SEDD) in few-step generation by combining \ours with DPM-Solver. \cref{tab:abl_steps} reports Gen.\ PPL (↓) and diversity (↑) at matched sampling budgets (25/50/100/250 steps), with decoder temperature fixed to T=1 for \ours.

At the most aggressive budget of 25 steps, \ours achieves the best fluency among all methods, while retaining strong diversity. As the step budget increases, MDLM/SEDD become more fluent, but \ours remains competitive and stays in the same diversity regime: at 50–250 steps, \ours yields diversity around 0.49, comparable to the baselines. For instance, at 250 steps, \ours attains 0.488 diversity versus 0.478/0.474 for MDLM/SEDD, though with higher Gen.\ PPL. At intermediate budgets (100 steps), \ours reaches 154.83 Gen.\ PPL with 0.4947 diversity, surpassing the baselines’ diversity (0.4877/0.4892) while trailing in fluency.

Overall, these results highlight two takeaways: (i) thanks to the continuous formulation, \ours can exploit advanced solvers to enable high-quality fast sampling, and (ii) in the few-step regime (especially 25 steps), \ours is already on par with or better than discrete diffusion baselines in fluency, while maintaining comparable diversity.
% \begin{table}[H]
% \renewcommand\arraystretch{1.2}
% \centering
% \caption{Ablation on dimension of hidden states\redmark{solver}}
% \label{tab:abl_dim}
% \begin{tabular}{l|ll}
% \toprule
% Hidden Size & Gen.PPL & Div.   \\ \midrule
% 768         & 523.07  & 0.5764 \\
% 256         & 294.42  & 0.5056 \\
% 64          & 164.90  & 0.4842 \\ \bottomrule
% \end{tabular}
% \end{table}

\begin{table}[H]
\renewcommand\arraystretch{1.2}
\centering
\caption{Ablation on dimension of hidden states.}
\label{tab:abl_dim}
\begin{tabular}{c|l|ll}
\toprule
Hidden Size          & Solver     & Gen.PPL($\downarrow$) & Div.($\uparrow$)   \\ \midrule
\multirow{2}{*}{768} & Ancestral  & 523.07  & 0.5764 \\
                     & DPM-Solver & 546.10  & 0.6212 \\ \midrule
\multirow{2}{*}{256} & Ancestral  & 294.42  & 0.5056 \\
                     & DPM-Solver & 300.01  & 0.5475 \\ \midrule
\multirow{2}{*}{64}  & Ancestral  & 164.90  & 0.4842 \\
                     & DPM-Solver & 147.53  & 0.488  \\ \bottomrule
\end{tabular}
\end{table}
\subsubsection{Hidden State Dimension}

We study the effect of the hidden state dimension on the overall generation quality. Specifically, we vary $d \in \{64, 256, 768\}$ while keeping the decoder temperature to 1. As shown in \cref{tab:abl_dim}, increasing the hidden dimension does not translate into better text quality. While larger hidden states generally provide the decoder with higher representational capacity, they hinder the diffusion process. This leads to a degradation in overall text quality: the generative perplexity increases substantially from 164.90 at $d=64$ to 294.42 at $d=256$, and further to 523.07 at $d=768$. Interestingly, even the DPM-Solver, which is designed to handle diffusion more efficiently, struggles with the increased complexity brought by higher hidden dimensions. As the hidden state dimension grows, the DPM-Solver's generative perplexity surpasses that of the ancestral sampler, indicating that the solver cannot mitigate the added difficulty of a larger state space.

\subsubsection{Choice of Decoder}
\begin{table}[]
\renewcommand\arraystretch{1.2}
\centering
\caption{Ablation on choice of decoder}
\label{tab:abl_dec}
\begin{tabular}{l|ll}
\toprule
Decoder & Gen.PPL($\downarrow$) & Div.($\uparrow$)   \\ \midrule
Linear         & 153.44  & 0.1238 \\
Transformer Decoder  & 164.90  & 0.4842 \\ \bottomrule
\end{tabular}
\end{table}
% We analyze the impact of the decoder architecture on generation quality by comparing a linear decoder head with a Transformer-based decoder, as summarized in \cref{tab:abl_dec}. As discussed in \cref{sec:empirical}, linear heads exhibit weak performance in token recovery. Here we further examine whether this limitation also affects full text generation. From the perspective of perplexity, the linear decoder achieves a slightly lower Gen.PPL (153.44) than the Transformer decoder (164.90). However, this apparent advantage is misleading when considered alongside diversity. The linear decoder yields an extremely low diversity score (0.1238), indicating severe repetition and mode collapse in the generated text. In practice, this manifests as repetitive phrases and limited lexical or structural variation, substantially degrading perceived text quality despite the favorable perplexity. In contrast, the Transformer decoder produces significantly higher diversity (0.4842), while maintaining competitive Gen.PPL. This suggests that the Transformer’s autoregressive modeling capacity and contextualized representations are crucial for text quality in generation. Overall, these results confirm that linear decoders are inadequate for generation, as their lack of expressive power leads to pathological repetition.
We study the effect of decoder architecture by comparing a linear head and a Transformer decoder (\cref{tab:abl_dec}). As noted in \cref{sec:empirical}, linear heads perform poorly in token recovery; we examine whether this limitation extends to text generation. Although the linear decoder attains a slightly lower Gen.PPL (153.44 vs. 164.90), it suffers from extremely low diversity (0.1238), indicating severe repetition and mode collapse, which substantially degrades text quality in practice. In contrast, the Transformer decoder achieves much higher diversity (0.4842) while maintaining competitive perplexity, highlighting the importance of contextual modeling for high quality decoding. Overall, linear decoders are inadequate for generation due to their limited expressive capacity.

\section{Conclusion}
In this work, we argue that the performance gap between continuous and discrete diffusion language models arises primarily from a decoding rounding rather than from limitations of continuous diffusion itself. Through theoretical analysis and controlled token-recovery experiments, we show that rounding is inherently sequence dependent and that pointwise linear heads are provably suboptimal for mapping continuous representations back to tokens. Building on this insight, we propose \ours, a two-stage framework that performs diffusion entirely in an embedding space while using a context aware autoregressive Transformer decoder to realize discrete tokens. Experiments on LM1B and OpenWebText demonstrate that \ours substantially improves over latent diffusion baselines and becomes competitive with strong discrete DLMs, while exposing a simple decoder temperature mechanism to smoothly trade off fluency and diversity. Together, these results suggest that continuous diffusion and discrete language modeling are complementary rather than competing, and that treating rounding as a contextual problem unlocks much of the unrealized potential of continuous diffusion language models.

\section*{Acknowledgement}
This work is sponsored by the National Natural Science Foundation of China (NSFC) grant (No. 62576211) the National Key Research and Development Program of China (No. 2023ZD0121402), and the Specialized Program on Fundamental Research from Science and Technology Commission of Shanghai Municipality (No. 2025SHZDZX025G09).

\bibliography{references}
\appendix
\section{More Experimental Details}
The decoder of \ours is trained for 1 epoch with a batch size of 512. The Adam optimizer was used with a learning rate of $1.0 \times 10^{-3}$, weight decay of $1.0 \times 10^{-1}$, $\beta_1=0.9$, $\beta_2=0.95$, and gradient clipping at a maximum norm of 1.0. The learning rate followed a cosine annealing schedule, with a warmup ratio of 5\%.
For the DiT training, we also use a batch size of 512 and Adam optimizer. The learning rate is set to $4.0\times 1.0^{-4}$ with a constant learning rate schedule. The weight decay is set to $0.02$, $\beta_1=0.9$, $\beta_2=0.95$, and gradient clipping at a maximum norm of 1.0. The warmup steps is set to 10000.
\section{Proof}
\label{sec:proof}
\paragraph{Detailed proof of Proposition ~\ref{prop}.}
Let $p(x,y)$ denote the true joint distribution of $(X,Y)$, and let $\Delta$ be the probability simplex over $\mathcal{V}^L$ (all length-$L$ token sequences). We assume each candidate decoder $q(\cdot\mid x)$ is a valid conditional distribution in $\Delta$ for $p$-a.e.\ $x$.

\begin{proof}
We prove the two minimizers and then the lower bound by conditional total correlation.

\smallskip
\noindent\textbf{Step 1: Bayes-optimal decoder over $\mathcal{D}_{\mathrm{seq}}$.}
Fix any conditional distribution $q(y\mid x)$. The expected NLL risk can be rewritten by conditioning on $X$:
\begin{equation}
\mathcal{R}(q)
= \mathbb{E}_{X}\Big[\mathbb{E}_{Y\mid X}\big[-\log q(Y\mid X)\big]\Big]
= \mathbb{E}_{X}\Big[\sum_{y} p(y\mid X)\big(-\log q(y\mid X)\big)\Big].
\end{equation}
For each fixed value $x$, define the \emph{cross-entropy} between $p(\cdot\mid x)$ and $q(\cdot\mid x)$:
\begin{equation}
H\!\big(p(\cdot\mid x),q(\cdot\mid x)\big)
:= -\sum_{y} p(y\mid x)\log q(y\mid x).
\end{equation}
A standard identity (cross-entropy decomposition) states that for any two distributions $P,Q$ on the same space,
\begin{equation}
H(P,Q)=H(P)+D_{\mathrm{KL}}(P\|Q),
\end{equation}
hence, applying it pointwise at each $x$ yields
\begin{equation}
-\sum_{y} p(y\mid x)\log q(y\mid x)
=
H\!\big(p(\cdot\mid x)\big)
+
D_{\mathrm{KL}}\!\big(p(\cdot\mid x)\,\|\,q(\cdot\mid x)\big).
\end{equation}
Taking expectation over $X$ gives
\begin{equation}
\mathcal{R}(q)
=
\underbrace{\mathbb{E}_{X}\big[H(p(\cdot\mid X))\big]}_{H(Y\mid X)}
+
\mathbb{E}_{X}\Big[D_{\mathrm{KL}}\!\big(p(\cdot\mid X)\,\|\,q(\cdot\mid X)\big)\Big].
\label{eq:seq_risk_decomp}
\end{equation}
Since KL divergence is nonnegative and equals $0$ iff the two distributions agree almost surely, the second term in~\eqref{eq:seq_risk_decomp} is minimized (to $0$) by choosing
\begin{equation}
q^*(y\mid x)=p(y\mid x)\quad\text{for $p$-a.e.\ $x$}.
\end{equation}
Therefore,
\begin{equation}
\min_{q\in\mathcal{D}_{\mathrm{seq}}}\mathcal{R}(q)=H(Y\mid X).
\end{equation}
(These facts follow from the cross-entropy/KL identity and nonnegativity of KL; see, e.g., standard information theory references. \citep{ } ) % cite in bib; see note below

\smallskip
\noindent\textbf{Step 2: Bayes-optimal decoder over $\mathcal{D}_{\mathrm{pw}}$.}
Now restrict to pointwise-factorized decoders
\begin{equation}
q(y\mid x)=\prod_{i=1}^L q_i(y_i\mid x_i),
\qquad x=(x_1,\dots,x_L).
\end{equation}
Then the log-likelihood separates:
\begin{equation}
-\log q(Y\mid X) = -\sum_{i=1}^L \log q_i(Y_i\mid X_i),
\end{equation}
and by linearity of expectation,
\begin{equation}
\mathcal{R}(q)
=\sum_{i=1}^L \mathbb{E}\big[-\log q_i(Y_i\mid X_i)\big].
\label{eq:pw_risk_sum}
\end{equation}
For each position $i$, the inner expectation depends only on the joint law of $(X_i,Y_i)$. Repeating the same cross-entropy decomposition as in Step 1 but for the conditional distribution $p(y_i\mid x_i)$ gives, for any $q_i(\cdot\mid x_i)$,
\begin{equation}
\mathbb{E}\big[-\log q_i(Y_i\mid X_i)\big]
=
H(Y_i\mid X_i)
+
\mathbb{E}_{X_i}\Big[D_{\mathrm{KL}}\!\big(p(\cdot\mid X_i)\,\|\,q_i(\cdot\mid X_i)\big)\Big],
\label{eq:pos_decomp}
\end{equation}
where $p(\cdot\mid X_i)$ abbreviates $p(Y_i=\cdot\mid X_i)$. The KL term is minimized to $0$ iff
\begin{equation}
q_i^*(y_i\mid x_i)=p(y_i\mid x_i)\quad\text{for $p$-a.e.\ $x_i$}.
\end{equation}
Plugging into~\eqref{eq:pw_risk_sum} yields
\begin{equation}
\min_{q\in\mathcal{D}_{\mathrm{pw}}}\mathcal{R}(q)
=
\sum_{i=1}^L H(Y_i\mid X_i).
\end{equation}

\smallskip
\noindent\textbf{Step 3: The exact optimality gap.}
Subtracting the two minima obtained above gives the equality:
\begin{equation}
\min_{q\in \mathcal{D}_{\mathrm{pw}}}\mathcal{R}(q)\;-\;\min_{q\in \mathcal{D}_{\mathrm{seq}}}\mathcal{R}(q)
=
\sum_{i=1}^L H(Y_i\mid X_i)\;-\;H(Y\mid X).
\end{equation}

\smallskip
\noindent\textbf{Step 4: Lower bound by conditional total correlation and nonnegativity.}
By the definition of conditional total correlation,
\begin{equation}
\mathrm{TC}(Y\mid X)
=
\mathbb{E}_{X}\!\left[
D_{\mathrm{KL}}\!\left(p(Y\mid X)\,\Big\|\,\prod_{i=1}^L p(Y_i\mid X)\right)
\right]
=
\sum_{i=1}^L H(Y_i\mid X)-H(Y\mid X)
\;\ge\;0,
\end{equation}
where nonnegativity follows from KL $\ge 0$. Moreover, since $X$ contains at least as much information as $X_i$,
conditioning reduces entropy:
\begin{equation}
H(Y_i\mid X)\;\le\;H(Y_i\mid X_i)\qquad\forall i.
\end{equation}
Therefore,
\begin{equation}
\sum_{i=1}^L H(Y_i\mid X_i)-H(Y\mid X)
\;\ge\;
\sum_{i=1}^L H(Y_i\mid X)-H(Y\mid X)
=
\mathrm{TC}(Y\mid X)
\;\ge\;0,
\end{equation}
which proves Proposition~\ref{prop}.
\end{proof}

% \paragraph{References for standard identities (for your bibliography).}
% The proof uses only two standard facts:
% (i) cross-entropy decomposition $H(P,Q)=H(P)+D_{\mathrm{KL}}(P\|Q)$ and KL $\ge 0$ (e.g., Chapter 2 of Cover--Thomas \citep{cover_thomas_elements}); and
% (ii) conditioning reduces entropy $H(U\mid V,W)\le H(U\mid V)$ (standard information theory lecture notes \citep{langer_lecture16}).
% The definition/entropy form of total correlation is standard (e.g., \citep{watanabe_total_correlation}).

% ====================
% BIBLIOGRAPHY
% ====================

\end{document}